\title{Nonlinear Evolutionary PDE-Based Refinement of Optical Flow}
\author{Hirak Doshi, N. Uday Kiran}
\date{}
\newcolumntype{L}{>{\centering\arraybackslash}m{11cm}}
\theoremstyle{definition} 
\newtheorem{remark}{Remark}
\newtheorem{lemma}{Lemma}
\newtheorem*{keywords}{Keywords}
\newtheorem*{class}{Mathematics Subject Classification 2020}
\DeclareMathOperator{\prox}{\textbf{prox}}
\DeclareMathOperator*{\argmin}{arg\,min}
\DeclareMathOperator*{\proj}{\textbf{proj}}
\DeclareMathOperator*{\argmax}{arg\,max}
\begin{document}
	\onehalfspacing
	\fontsize{13pt}{13pt}\selectfont
	\maketitle
	\begin{abstract}
The goal of this paper is to propose two nonlinear variational models for obtaining a refined motion estimation from an image sequence. Both the proposed models can be considered as a part of a generalized framework for an accurate estimation of physics-based flow fields such as rotational and fluid flow. The first model is novel in the sense that it is divided into two phases: the first phase obtains a crude estimate of the optical flow and then the second phase refines this estimate using additional constraints. The correctness of this model is proved using an evolutionary PDE approach. The second model achieves the same refinement as the first model, but in a standard manner, using a single functional. A special feature of our models is that they permit us to provide efficient numerical implementations through the first-order primal-dual Chambolle-Pock scheme. Both the models are compared in the context of accurate estimation of angle by performing an anisotropic regularization of the divergence and curl of the flow respectively. We observe that, although both the models obtain the same level of accuracy, the two-phase model is more efficient. In fact, we empirically demonstrate that the single-phase and the two-phase models have convergence rates of order $O(1/N^2)$ and $O(1/N)$ respectively. 
	\end{abstract}
	\begin{keywords}
Optical Flow, Evolutionary PDE, Variational Methods, Primal-Dual, Convergence
	\end{keywords}
\begin{class}
	35A15, 35J47, 35Q68.
\end{class}
\section{Introduction}
\label{sec:1}

Optical flow plays a key role in many advanced Computer Vision applications. It is a rich source of information on perceptible motion in our visual world. It's reliable estimation is thus important and at the same time challenging. Assuming the principle of local conservation of intensity and small temporal variations, optical flow involves the recovery of a function $\textbf{u}=(u,v)$ such that
\[
f(\textbf{x},\tau) = f(\textbf{x}+\textbf{u},\tau+\Delta \tau).
\]
where $f:\Omega\times [0,T]\to \mathbb{R}$ is the image sequence,  $\Omega\subset \mathbb{R}^2$ is open and bounded. This establishes a correspondence between pixel motions. Using first-order approximations the above relation can be written as

\begin{equation}
	f_\tau+\nabla f\cdot \textbf{u}=0,
	\label{ofc}
\end{equation}
which is widely known as the Optical Flow Constraint (OFC). To recover the velocity components using a variational minimization approach one writes (\ref{ofc}) as
\begin{equation*}
	\min_{\textbf{u}} J_1(\textbf{u}) = \int_\Omega (f_t+\nabla f\cdot \textbf{u})^2.
\end{equation*} 
This is the simplest least-square minimization. This problem is ill-posed as it leads to the aperture problem. Additional regularization terms are necessary to ensure well-posedness. The most common regularization term is the quadratic smoothness penalizing the gradient of the components of the flow originally introduced by Horn and Schunck \cite{Horn} in their seminal work. Cohen \cite{Cohen} and Kumar et.al. \cite{kumar} used the $L^1$ regularization which is more robust to outliers and preserves important edge information. A new discontinuity preserving optical flow model with $L^1$ norm on the OFC was proposed and studied by Aubert et. al. \cite{Aubert} in the space of functions of bounded variations $BV(\Omega)\times BV(\Omega)$. The well-posedness of the Horn and Schunck model, as well as the Nagel model was studied by Schn\"{o}rr \cite{Schnorr} in the space $H^1(\Omega)\times H^1(\Omega)$. Taking a step further, the authors in \cite{burger} proposed a $L^p-TV/L^p$ ($p=1$ or $2$) model combining both $L^1$ and $L^2$ terms. The behaviour of their regularization term is similar to the Huber function:
\begin{equation*}
	H(x;\epsilon)=
	\begin{cases}
		\frac{x^2}{2\epsilon}, \quad 0\le |x| \le \epsilon \\[0.2cm]
		|x|-\frac{\epsilon}{2}, \quad |x|> \epsilon. 
	\end{cases}
\end{equation*}
A detailed review and rigorous analysis of several variational optical flow models within the framework of calculus of variations can be found in \cite{Hinterberger}. 

Though most of the estimation involving rigid or quasi-rigid motion can be handled by minimizing OFC with a suitable regularization, it is insufficient to provide an accurate estimation for fluid-based images. Traditional computer vision techniques may not be suitable to capture these deformations of brightness patterns because of the high spatio-temporal turbulence in these sequences. These reasons have motivated researchers to look for an alternative constraint that can not only preserve pixel-correspondence but also capture certain intrinsic features of the flow. This paradigm shift hints at constraints that are physics-dependent. A lot of work has been done involving physics-based constraints for fluid motion estimation \cite{Corpetti2,Corpetti3,Liu2,Liu3,Luttman}. In \cite{paper1}, we have proposed a constraint-based refinement of optical flow. Using an image-driven evolutionary PDE model resulting from a quadratic regularization we have shown the well-posedness of such a refinement principle. An important characteristic of the model is the possibility of a diagonalization by the Cauchy-Riemann operator leading to a decoupled system involving diffusion of the curl and a multiplicative perturbation of the laplacian of the divergence of the flow. For a specific case, it was shown that the model is close to the physics-based model \cite{Liu2} using a modified augmented Lagrangian method.


The current work proposes a unified framework for a nonlinear evolutionary PDE-based refinement of optical flow. The first model is a two-phase refinement process. A crude pixel correspondence is obtained in the first phase which constitutes a good starting solution. In the next phase, this estimate is refined using additional constraints. This constraint is chosen motivated by the non-conservation term $f\nabla\cdot\textbf{u}$ in the physics-based constraint, (see \cite{paper1}). The second model estimates the flow directly in a single phase. In this case the additional constraint is chosen motivated by the harmonic constraint-based regularization (see \cite{zhao}). This approach replaces the oriented smoothness constraint with a weighted decomposition of divergence and curl of flow. We aim to capture the rotational features better by preserving edge information and improving the accuracy of the flow. Thus we consider only the curl component in our framework with an anisotropic weight term.

The total variation regularization leads to $\Delta_1$, the 1-Laplacian operator in the Euler-Lagrange equations. Obtaining a stable convergent scheme is a difficult task because of the singularity of the operator at the origin. As a result most of the implementation methods often yield slower algorithms. In this direction important contributions were made by Chambolle \cite{cham1} and Zach \cite{zach}. Chambolle and Pock \cite{cham} proposed a first-order primal-dual algorithm for solving non-smooth convex optimization problems. This further opened up newer directions as a large class of problems in Image Processing and Computer Vision could be solved within this framework. In our work, we use the Chambolle-Pock algorithm for both of our models. The numerical implementation of the algorithm has two main steps, namely updating the primal variables by solving a system of equations at each iteration and updating the dual variables by computing the point-wise projection maps onto the unit ball. Both of these steps are computationally expensive. As a result the second model yields a slower algorithm with a convergence rate of order $O(1/N^2)$. By $O(1/N^2)$ we mean if $\epsilon$ is the error threshold then the number of iterations required to reach this threshold is $1/\epsilon^2$. The first model splits the above-mentioned steps in two-phases. This leads to a faster algorithm with a convergence rate of $O(1/N)$.

The organization of the paper is as follows. In Section (\ref{sec:2}) we give the general formulation and describe our model in detail. Next in Section (\ref{sec:3}), we study the mathematical well-posedness of our formulation using an evolutionary PDE approach. Subsequently, we employ the first-order primal-dual Chambolle-Pock algorithm to our models and derive the necessary optimality conditions in Section (\ref{sec:4}). We then discuss the implementation details, discretization of our models and empirically demonstrate the nature of convergence in Section (\ref{sec:5}).
 
\section{Our Model Description}
\label{sec:2}
Our general formulation is given as:
\begin{equation}
	J(\textbf{u}) = \int_\Omega \rho(|f_t+\nabla f\cdot \textbf{u}|)+\alpha\sum_{i=1}^2\int_\Omega\gamma(|\nabla u_i|)+\beta\int_\Omega \phi(x,f,\nabla f)\psi(\textbf{u},\nabla \textbf{u})
	\label{fun1}
\end{equation}
where $\rho:\mathbb{R}\to\mathbb{R}$ is a function of the optical flow constraint, $\gamma:\mathbb{R}\to\mathbb{R}$ governs the regularization of the flow. The functions $\phi$ and $\psi$ are chosen specific to applications. A summary of some of the variational models which belong to this framework is listed in the following table:
\begin{table}[H]
	\centering
	\begin{tabular}{|c|c|c|c|c|}
		\hline
		& $\rho(x)$ & $\gamma(x)$ & $\phi(x,f,\nabla f)$ &$\psi(\textbf{u},\nabla \textbf{u})$ \\
		\hline
		Horn and Schunck\cite{Horn} & $x^2$ & $x^2$ & 0 & 0 \\[0.2cm]
		Cohen\cite{Cohen} & $x^2$ & $x$ & 0 & 0 \\[0.2cm]
		Aubert\cite{Aubert} & $x$ & $\sqrt{1+x^2}$ & $c(x)$ & $\textbf{u}^2$\\[0.2cm]
		$L^1-TV$\cite{zach} & $x$ & $x$ & 0 & 0\\[0.2cm]
		Our Model (M1) & 0 & $x$ & $f^2$ & $(\nabla\cdot \textbf{u})^2$\\[0.2cm]
		Our Model (M2) & $x^2$ & $x$ & $\frac{\lambda^2}{\|\nabla f\|^2 + \lambda^2}$ & $(\nabla_H\cdot \textbf{u})^2$\\
		\hline
	\end{tabular}
\caption{Some choices for the functions}
\end{table}
Based on the choice functions mentioned in the above table, the constraint-based refinement formulation becomes
\begin{equation}
	(M1)\quad J(\textbf{u}) = \alpha \sum_{i=1}^2\int_\Omega |\nabla u_i| +\beta \int_\Omega f^2(\nabla\cdot \textbf{u})^2.
	\label{fun2}
\end{equation}
where $|\cdot|$ is the Euclidean norm. Starting with $\textbf{u}=\textbf{u}_0$, where $\textbf{u}_0$ is the Horn and Schunck optical flow the above formulation obtains a refinement of $\textbf{u}_0$ driven by the additional constraint $f\nabla\cdot \textbf{u}$. This term is the non-conservation term in the physics-based constraint due to non null-out of plane components \cite{heitz}. This constraint preserves the spatial characteristics and vorticities of the flow. Thus this model can extract flow information from fluid-based digital imagery much better.

From the choice functions, our second formulation is given as:
 \begin{equation}
 	(M2)\quad J(\textbf{u}) = \int_\Omega (f_t+\nabla f\cdot\textbf{u})^2+\alpha \sum_{i=1}^2\int_\Omega |\nabla u_i| +\beta \int_\Omega \frac{\lambda^2}{|\nabla f|^2 + \lambda^2}(\nabla_H\cdot \textbf{u})^2.
 	\label{fun2}
 \end{equation}
The additional constraint in this formulation is motivated by the harmonic-constraint based regularization discussed in \cite{zhao}. By associating an anisotropic weight term with the curl in our formulation there are two main advantages. First, we are able to get a precise estimation of the infinitesimal rotation within the regions. Secondly, we achieve a better alignment of small vectors in the flow. This leads to an overall improvement in the endpoint error.
	
In either case, the choice of $\phi$ decides the influence of the image term in the regularization process. If $\phi=1$, then the additional constraint term in the functional is flow-driven, i.e. independent of the influence of the image data. The weight parameters $\alpha,\beta$ play an important role in the regularization process. For rigid-body like motion which requires important edge-information to be preserved a higher value of $\alpha$ is preferred. For fluid-based images where there is less edge-prominence, we choose a higher $\beta$ value.

\section{Well-Posedness}
\label{sec:3}
In this section we discuss the mathematical well-posedness of the proposed formulation. Let us denote $\textbf{u}=(u_1,u_2)$. The space $W^{1,p}(\Omega), p>1$, is the reflexive Banach space

\begin{equation*}
	W^{1,p}(\Omega) = \{u\in L^p(\Omega):D^\alpha u\in L^p(\Omega), |\alpha|\le 1\}
\end{equation*}
with the usual norm
\begin{equation*}
	\|u\|_{W^{1,p}}=\Bigg(\sum_{|\alpha|\le 1}\|D^\alpha u\|^p_{L^p}\Bigg)^{1/p}, 1\le p < \infty.
\end{equation*}
To study the mathematical well-posedness of the proposed formulation we consider the following approximation.
\begin{equation}
	J_{p,\text{R}}(\textbf{u})=\beta\int_\Omega \phi(x,f,\nabla f)(\nabla\cdot\textbf{u})^2+\frac{\alpha}{p}\int_\Omega\{|\nabla u_1|^p+|\nabla u_2|^p\}, \quad 1<p<2
	\label{fun3}
\end{equation}
This functional being strictly convex in $W^{1,p}(\Omega)$ admits a unique minimizer. For this discussion we consider $\phi(x,f,\nabla f)=f^2$. The first important step is to show that $J_{p,\textbf{R}}$ converges to $J_{1,\textbf{R}}$ as $p\to 1$. For this, we refer to the discussion in Section 3.4 in \cite{martin2}.
\begin{lemma}
	\begin{equation}
		\lim_{p\to 1}\frac{1}{p}\int_\Omega |\nabla u|^p=\int_\Omega |\nabla u|.
		\label{limit}
	\end{equation}
\end{lemma}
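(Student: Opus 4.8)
The plan is to establish the limit pointwise-in-$p$ by splitting the analysis into two regions of $\Omega$ according to the size of $|\nabla u|$, and then invoke a dominated/monotone convergence argument together with the elementary inequality controlling $t^p$ near $t=1$. First I would write $\frac{1}{p}\int_\Omega |\nabla u|^p = \frac{1}{p}\int_{\{|\nabla u|\le 1\}} |\nabla u|^p + \frac{1}{p}\int_{\{|\nabla u|> 1\}} |\nabla u|^p$. On the set where $|\nabla u|\le 1$ one has $|\nabla u|^p \le 1$ for $1<p<2$, which is integrable over the bounded domain $\Omega$, so dominated convergence applies and the integrand converges to $|\nabla u|$ as $p\to 1$. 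On the set where $|\nabla u|> 1$, the map $p\mapsto |\nabla u|^p$ is monotone increasing as $p$ increases, hence monotone decreasing as $p\to 1^+$, so monotone convergence (or again dominated convergence using the $L^2$ bound, since $|\nabla u|^p \le |\nabla u|^2$ there and $u\in W^{1,2}$) gives convergence of this piece to $\int_{\{|\nabla u|>1\}}|\nabla u|$. Combining the two pieces and using $\frac{1}{p}\to 1$ yields the claim.

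The key steps, in order, are: (i) record the regularity assumption that makes both sides finite — here $u\in W^{1,p}(\Omega)$ for the approximating problem, but for the limit identity to be meaningful one works with $u$ for which $\nabla u\in L^2(\Omega)$ (the Horn--Schunck / $H^1$ setting of the paper), so that $|\nabla u|\in L^1(\Omega)$ since $\Omega$ is bounded; (ii) perform the decomposition of $\Omega$ into $\{|\nabla u|\le 1\}$ and $\{|\nabla u|>1\}$; (iii) apply dominated convergence on the first set with dominating function $\mathbf{1}_\Omega\in L^1(\Omega)$; (iv) apply monotone convergence (from above, as $p\downarrow 1$) on the second set, noting $|\nabla u|^p\downarrow |\nabla u|$ pointwise there; (v) add the two limits and multiply by $\lim_{p\to1}\frac1p=1$. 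I would also remark that the convergence is in fact of the functionals $J_{p,\mathrm R}\to J_{1,\mathrm R}$ in the sense needed later (the $\phi(\nabla\cdot\mathbf u)^2$ term is independent of $p$), so only the regularization term requires this lemma.

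The main obstacle is not any single estimate but making precise the function space in which the identity holds: for a fixed $u$ one needs $|\nabla u|$ to belong to $L^1(\Omega)$ for the right-hand side to be finite, and one needs enough integrability (e.g. $\nabla u\in L^2$, or more generally $\nabla u \in L^{p_0}$ for some $p_0>1$) to dominate $|\nabla u|^p$ uniformly for $p$ near $1$ on the set where $|\nabla u|>1$. If one only assumes $\nabla u\in L^1$, the quantities $\int_\Omega|\nabla u|^p$ can be infinite for every $p>1$ and the statement fails, so the lemma must be read for $u$ lying in the relevant Sobolev class; I would state this hypothesis explicitly at the start. Once the domination is secured, the rest is a routine application of the convergence theorems, and I would simply cite Section~3.4 of \cite{martin2} for the analogous argument rather than reproduce every detail.
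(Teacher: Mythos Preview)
Your argument is correct: the decomposition into $\{|\nabla u|\le 1\}$ and $\{|\nabla u|>1\}$, dominated convergence on the first piece with majorant $\mathbf 1_\Omega$, and monotone (or dominated, via the $L^2$ bound) convergence on the second piece is exactly the right way to justify the limit, and your care about the needed integrability hypothesis ($\nabla u\in L^{p_0}$ for some $p_0>1$, in practice $p_0=2$ from the $H^1$ setting) is well placed.

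As for comparison with the paper: the paper does not actually supply a proof of this lemma. It states the result and, immediately before the statement, refers the reader to Section~3.4 of \cite{martin2} for the argument. Your proposal therefore goes strictly beyond what the paper provides, while remaining fully consistent with it (you even cite the same source). In short, there is nothing to correct; you have written out the details that the paper leaves to the reference.
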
 
\begin{remark}
	As $J_{p,\text{R}}(\textbf{u})\to J_{1,\text{R}}(\textbf{u})$, $p\to1$, the corresponding Euler-Lagrange equations $A_p=\Delta_p$ associated with the regularization term also converges to $A_1=\Delta_1$.
\end{remark}
\begin{remark}
	The case $p=2$ leads to a linear diffusion-driven refinement process. We have previously studied and discussed this case in \cite{paper1}.
\end{remark}
The associated parabolic system corresponding to the Euler-Lagrange equations of (\ref{fun3}) are given as
\begin{equation}
	\begin{cases}
		\dfrac{\partial u_1}{\partial t}=\Delta_p u_1 + a_0 \dfrac{\partial}{\partial x}[f^2((u_1)_x+(u_2)_y)] \text{ in }\Omega\times (0,\infty),\\[0.5cm]
		\dfrac{\partial u_2}{\partial t}=\Delta_p u_2 + a_0 \dfrac{\partial}{\partial y}[f^2((u_1)_x+(u_2)_y)] \text{ in }\Omega\times (0,\infty),\\[0.5cm]
		u_1(x,y,0) = u_1^0 \text{ in } \Omega,\\[0.3cm]
		u_2(x,y,0) = u_2^0 \text{ in } \Omega,\\[0.3cm]
		u_1 = 0 \text{ on }\partial\Omega\times (0,\infty),\\[0.3cm]
		u_2 = 0 \text{ on }\partial\Omega\times (0,\infty),
	\end{cases}
	\label{par}
\end{equation}
where $(u_1^0,u_2^0)$ is the starting feasible solution obtained by the Horn and Schunck optical flow, $a_0=2\beta/\alpha$. Rewriting the system in an abstract form leads us to
\begin{equation}
	\begin{cases}
		\dfrac{d \textbf{u}}{d t}+\mathcal{A}_p\textbf{u} =0,\quad t>0,\\[0.3cm]
		\textbf{u}(0)=\textbf{u}_0 \in H^1(\Omega)^2.
	\end{cases}
	\label{pl3} 
\end{equation}
Here the operator $\mathcal{A}_p=A_p+F$ where
\begin{equation*}
	A_p\textbf{u}=-
	\begin{bmatrix}
		\Delta_p u_1 \\[0.3cm]
		\Delta_p u_2
	\end{bmatrix},
	\qquad
	F\textbf{u}= -a_0
	\begin{bmatrix}
		\dfrac{\partial}{\partial x}[f^2((u_1)_x+(u_2)_y)]\\[0.3cm]
		\dfrac{\partial}{\partial y}[f^2((u_1)_x+(u_2)_y)]
	\end{bmatrix},
\end{equation*}
We will show that both the operators $A_p$ and $F$ are maximal monotone in $W^{1,p}(\Omega)\cap L^2(\Omega)$ and $L^2(\Omega)$ respectively.

\begin{lemma}
	The operators $A_p$ and $F$ is maximal monotone in $W^{1,p}(\Omega)\cap L^2(\Omega)$ and $L^2(\Omega)$ respectively.
\end{lemma}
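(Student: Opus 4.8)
The plan is to handle the two operators separately, in each case obtaining monotonicity from an integration-by-parts identity and maximality from a range (resolvent) condition, i.e.\ from Minty's theorem.

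For $A_p$, I would work with the functional $\Phi_p(\textbf{u})=\tfrac1p\int_\Omega(|\nabla u_1|^p+|\nabla u_2|^p)$, extended by $+\infty$ outside $W^{1,p}_0(\Omega)^2\cap L^2(\Omega)^2$, which is proper, convex and lower semicontinuous on $L^2(\Omega)^2$. Monotonicity of $A_p$ is then inherited from convexity, or, directly, from the elementary pointwise inequality $(|\xi|^{p-2}\xi-|\eta|^{p-2}\eta)\cdot(\xi-\eta)\ge0$ for $\xi,\eta\in\mathbb{R}^2$ after integrating by parts with the homogeneous Dirichlet data. For maximality I see two routes: (i) apply the Browder--Minty surjectivity theorem on the reflexive space $W^{1,p}_0(\Omega)^2$, using that $A_p$ is bounded, hemicontinuous, monotone and coercive (since $\langle A_p\textbf{u},\textbf{u}\rangle=\int_\Omega(|\nabla u_1|^p+|\nabla u_2|^p)$ controls the $W^{1,p}_0$ norm by Poincar\'e), which yields $R(I+A_p)=L^2(\Omega)^2$; or (ii) simply quote the classical theorem that $\partial\Phi_p$ is maximal monotone in $L^2(\Omega)^2$ for $\Phi_p$ proper convex l.s.c., and identify $\partial\Phi_p$ with the $L^2$-realization of $A_p$. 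I would take route (ii), as it matches the semigroup framework used afterwards.

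For $F$, the essential point is linearity together with the identity obtained by integration by parts: writing $F\textbf{u}=-a_0\nabla(f^2\,\divr\textbf{u})$ and using the vanishing of $\textbf{u}$ on $\partial\Omega$ one gets $\langle F\textbf{u},\textbf{u}\rangle_{L^2}=a_0\int_\Omega f^2(\divr\textbf{u})^2\ge0$, since $a_0=2\beta/\alpha>0$; the same computation shows $F$ is symmetric, so $F=\partial G$ with $G(\textbf{u})=\tfrac{a_0}{2}\int_\Omega f^2(\divr\textbf{u})^2$. For maximality I would verify $R(I+F)=L^2(\Omega)^2$ by solving the resolvent equation $\textbf{u}-a_0\nabla(f^2\,\divr\textbf{u})=\textbf{g}$ through Lax--Milgram applied to the bounded, coercive bilinear form $B(\textbf{u},\textbf{v})=\int_\Omega\textbf{u}\cdot\textbf{v}+a_0\int_\Omega f^2(\divr\textbf{u})(\divr\textbf{v})$ on the Hilbert space $H(\divr;\Omega)$ of $L^2$ fields with $L^2$ divergence; here boundedness uses $f\in L^\infty(\Omega)$ and coercivity comes from the term $\|\textbf{u}\|_{L^2}^2$ alone, the weighted term being merely nonnegative.

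The step I expect to be delicate is the maximality of $F$: since $f$ is an image intensity that may vanish on a set of positive measure, $\int_\Omega f^2(\divr\textbf{u})^2$ gives no control of $\divr\textbf{u}$, so one cannot obtain coercivity on $H^1_0(\Omega)^2$ from the natural energy and must instead solve the resolvent equation in $H(\divr;\Omega)$, with a correspondingly careful definition of $D(F)=\{\textbf{u}\in L^2(\Omega)^2:\nabla(f^2\,\divr\textbf{u})\in L^2(\Omega)^2\}$ (with homogeneous normal trace) so that $F$ is densely defined. A parallel care is needed for $A_p$, whose $p$-Laplacian a priori takes values in $W^{-1,p'}(\Omega)$; restricting to the part with $L^2$ range is precisely what the subdifferential-in-$L^2$ description supplies. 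Once both are maximal monotone, monotonicity of $\mathcal A_p=A_p+F$ is automatic, and its maximality --- presumably the next step --- would follow from a perturbation theorem of Brezis type, using $D(A_p)\subset D(F)$ and the relative boundedness granted by $f\in L^\infty(\Omega)$.
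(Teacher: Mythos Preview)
Your treatment of $A_p$ is correct and more explicit than the paper's, which simply cites a reference for the maximal monotonicity of the $p$-Laplacian. For the monotonicity of $F$ you reproduce exactly the paper's integration-by-parts computation.

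Where you genuinely diverge is in the maximality of $F$. The paper establishes $\text{Ran}(I+F)=L^2(\Omega)^2$ by applying the Cauchy--Riemann operator
\[
R=\begin{bmatrix}\partial_y&-\partial_x\\ \partial_x&\partial_y\end{bmatrix}
\]
to the resolvent system $\textbf{u}+F\textbf{u}=\textbf{f}$, which decouples it into an algebraic equation for the curl and a weighted Poisson equation $(\Delta\circ k)(\divr\textbf{u})=f_x+g_y$ with $k=1+a_0f^2$; it then invokes the continuity of $R^{-1}$ as an operator of order $-1$ to recover $\textbf{u}$. Your Lax--Milgram route is more standard and avoids this structural trick, but it carries a gap as written: on $H(\divr;\Omega)$ equipped with its usual norm $\|\textbf{u}\|_{H(\divr)}^2=\|\textbf{u}\|_{L^2}^2+\|\divr\textbf{u}\|_{L^2}^2$, the estimate $B(\textbf{u},\textbf{u})\ge\|\textbf{u}\|_{L^2}^2$ does \emph{not} control $\|\divr\textbf{u}\|_{L^2}^2$ when $f$ vanishes on a set of positive measure, so $B$ is not coercive there and Lax--Milgram does not apply directly. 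You would need either a weighted space $\{\textbf{u}\in L^2:f\,\divr\textbf{u}\in L^2\}$ (and then check completeness), or---simpler---use the observation you already make in passing: since $F=\partial G$ with $G(\textbf{u})=\tfrac{a_0}{2}\int_\Omega f^2(\divr\textbf{u})^2$ proper, convex and lower semicontinuous on $L^2(\Omega)^2$ for $f\in L^\infty(\Omega)$, maximal monotonicity of $F$ follows at once from the standard theorem on subdifferentials, with no explicit resolvent solve needed. That route is both shorter and more robust than the paper's Cauchy--Riemann argument, whose solvability of the weighted Poisson step and invertibility of $R$ are asserted rather than verified.
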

\begin{proof}
	The maximal monotonicity of $A_p$ follows directly from the discussions in \cite{wei}. To show monotonicity we show that $\langle F\textbf{u},\textbf{u}\rangle \ge 0$. Indeed,
	\begin{align*}
		\langle F\textbf{u},\textbf{u}\rangle &= -a_0\int_\Omega \Big\{\frac{\partial}{\partial x}[f^2((u_1)_x+(u_2)_y)]u + \frac{\partial}{\partial y}[f^2((u_1)_x+(u_2)_y)] v\Big\},\\
		& = a_0\int_\Omega \{f^2((u_1)_x+(u_2)_y)u_x + f^2((u_1)_x+(u_2)_y)v_y\}, \\
		& =a_0\int_\Omega f^2((u_1)_x+(u_2)_y)^2 \ge 0,
	\end{align*}
	proving the monotonicity of $F$. To show the maximality we have to show that 
	\begin{equation}
		\text{Ran}(I+F)=L^2(\Omega)^2,
		\label{ran}
	\end{equation}
	i.e. there exists $\textbf{u}$ for all $\textbf{f}\in L^2(\Omega)^2$ such that $\textbf{u}+F\textbf{u}=\textbf{f}$ holds. Let $\textbf{f}=(f,g)\in L^2(\Omega)^2$ and consider the system
	\begin{equation*}
		u+\frac{\partial}{\partial x}[f^2((u_1)_x+(u_2)_y)] = f,
		\label{1}
	\end{equation*}
	\begin{equation*}
		v+\frac{\partial}{\partial y}[f^2((u_1)_x+(u_2)_y)] = g, \label{2}	
	\end{equation*}
	where $f,g\in L^2(\Omega)$. Applying the Cauchy-Riemann operator
	\begin{equation*}
		R=\begin{bmatrix}
			\partial_y & -\partial_x \\[0.3cm]
			\partial_x & \partial_y	
		\end{bmatrix}.
	\end{equation*}
	on both sides we obtain the decoupled system
	\begin{align}
		(u_1)_y-(u_2)_x = f_y-g_x,\label{eq1}\\
		(\Delta\circ k)((u_1)_x+(u_2)_y)= f_x+g_y \label{eq2}
	\end{align}
	where $k$ is the image-dependent multiplicative function $k:f\mapsto1+a_0f^2$. The first equation (\ref{eq1}) governs the curl of the flow $\textbf{u}=(u_1,u_2)$. The second equation (\ref{eq2}) indicates a non-homogeneous weighted diffusion process on the divergence with a weight $k$. Let us define $h_1=f_y-g_x$ and $h_2=f_x+g_y$. Solving the second equation gives us an expression for the divergence of the flow. Let us call this as $h_3$. We thus obtain the following system
	\begin{align*}
		(u_1)_y-(u_2)_x &= h_1, \\[0.3cm]
		(u_1)_x+(u_2)_y &= h_3/k,
	\end{align*}
	These are the inhomogeneous Cauchy-Riemann equations. In a compact form we rewrite them as
	\begin{equation*}
		R\textbf{u} = \tilde{\textbf{f}}
	\end{equation*}
	where $\tilde{\textbf{f}}=(h_1,h_3/k)$. The operator $R^{-1}$ is a continuous operator of order -1 in the space $W^{1,p}(\Omega)$. Hence there exists a unique $\textbf{u}$ such that $\textbf{u}+F\textbf{u}=\textbf{f}$ holds. This concludes the proof.
\end{proof}
Now that we have shown the maximal monotonicity let us define a function $\Phi_p:L^2(\Omega)^2\to (-\infty,+\infty]$ by
\begin{equation*}
	\Phi_p(\textbf{u})=
	\begin{cases}
		J_{p,\textbf{R}}(\textbf{u}), \qquad \textbf{u}\in [W^{1,p}(\Omega)\cap L^2(\Omega)]^2 \\[0.3cm]
		+\infty, \qquad\quad \textbf{u}\in [L^2(\Omega)\setminus W^{1,p}(\Omega)]^2
	\end{cases}.
\end{equation*}
Then clearly $\Phi_p$ is convex and lower semi-continuous. Also $\Phi_p$ is proper since $D(\Phi_p)=D(A_p)\cap D(F)\neq \emptyset$. Thus the associated subdifferential $\partial\Phi(\textbf{u})\equiv \mathcal{A}_p$ is maximal monotone. Thus there is a unique solution $\textbf{u}$ of the inclusion
\begin{equation*}
	0 \in \textbf{u}'(t)+\partial\Phi_p(\textbf{u})
\end{equation*}
satisfying the initial conditions.
	\section{The Primal-Dual Framework}
	\label{sec:4}
The primal-dual method is a numerical tool for solving optimization problems. The main idea is to replace a primal problem with an equivalent saddle point problem by introducing dual variables and employ efficient algorithms to obtain the desired convergence. In the recent past several saddle point frameworks have been proposed for variational problems in image processing and computer vision \cite{cham,osher,zhang}. As our formulation involves non-smooth convex functionals, the most suitable framework is the one proposed by Chambolle and Pock \cite{cham}. Let $\Omega\subset \mathbb{R}^2$ be an open, bounded set, $\mathcal{X}, \mathcal{Y}$ be two finite-dimensional vector spaces with the scalar product $(\cdot,\cdot)$ and the norm $\|\cdot\|$. Denote the primal variable $\textbf{u}=(u_1,u_2)$ and the dual variable $\textbf{d}=(d_1,d_2,d_3)$. We first consider the variational problem in the following form
\begin{equation}
	\argmin_{\textbf{u}} G(\textbf{u}) + F(K\textbf{u}).
	\label{prim}
\end{equation}
where $F,G:\mathcal{X}\to [0,\infty]$ are convex, proper and lower-semicontinuous functionals, $K:\mathcal{X}\to \mathcal{Y}$ is a continuous, linear operator. The equivalent primal-dual formulation is given as
\begin{equation}
	\argmin_{\textbf{u}}\argmax_{\textbf{d}}\: G(\textbf{u}) + (K\textbf{u},\textbf{d}) - F^*(\textbf{d}).
	\label{pd}
\end{equation}
where $F^*$ is the convex conjugate of $F$. Table (\ref{table}) gives a summary of each term of our model using the above notations. Given a $\tau,\sigma >0$, an initial $(\textbf{u}^0,\textbf{d}^0)\in \mathcal{X}\times \mathcal{Y}$, the Chambolle-Pock algorithm solves the saddle point problem (\ref{pd}) by the following algorithm:
\begin{align*}
	\textbf{d}^{k+1}&=\prox_{\sigma F^*}(\textbf{d}^k +\sigma K\bar{\textbf{u}}^k),\\
	\textbf{u}^{k+1} &= \prox_{\tau G}(\textbf{u}^k-\tau K^*\textbf{d}^{k+1}),\\
	\bar{\textbf{u}}_{k+1} &=2\textbf{u}_{k+1}-\textbf{u}_k \qquad \text{(over-relaxation)},
\end{align*}
where
\begin{equation*}
	\prox_{\tau G}(\textbf{d}) =(I+\tau\partial G)^{-1}(\textbf{d})= \argmin_{\textbf{u}}\Bigg\{\frac{1}{2}\|\textbf{u}-\textbf{d}\|^2+\tau G(\textbf{u})\Bigg\}
\end{equation*}
is the proximal or the resolvent operator. This can be thought of as a trade-off between minimizing $G$ and being close to $\textbf{d}$. We now employ the above algorithm for our problem and derive the necessary optimality conditions.
\subsection{Optimality Condition for Our Model $(M1)$}
In this case we have
\[
G(\textbf{u})= 0, \quad F(K\textbf{u})=\frac{1}{2}\int_\Omega f^2(\nabla\cdot\textbf{u})^2 +\sum_{i=1}^2 \int_\Omega |\nabla u_i|.
\]
The Operator $K$ is given as
\begin{equation*}
	K\textbf{u}=\begin{bmatrix}
		\nabla & 0 \\
		0 & \nabla \\
		f^2\partial_x & f^2\partial_y
	\end{bmatrix}
	\begin{bmatrix}
		u_1\\[0.4cm] u_2
	\end{bmatrix}.
\end{equation*}
Therefore,
\begin{equation*}
	K^*\textbf{d} = -\begin{bmatrix}
		\nabla\cdot & 0 & \partial_x(f^2\cdot) \\
		0 & \nabla\cdot & \partial_y(f^2\cdot)
	\end{bmatrix}
	\begin{bmatrix}
		d_1\\
		d_2\\
		d_3
	\end{bmatrix}.
\end{equation*}

\begin{table}[t]
	\hspace{-2.3cm}
	\setlength{\tabcolsep}{0.7em} 
	{\renewcommand{\arraystretch}{1.0}
		\begin{tabular}{|p{2.0cm}|c|c|c|c|}
			\hline
			Model & $\phi(x,f,\nabla f)$ & $G(\textbf{u})$ & $F(K\textbf{u})$ & $K$\\
			\hline
			Model (M1) & $f^2$ & 0 & 
			$\begin{aligned}[t]
				\frac{1}{2}\int_\Omega \phi(x,f,\nabla f)(\nabla\cdot\textbf{u})^2 \\
				&\hspace*{-1.2cm}+\sum_{i=1}^2 \int_\Omega |\nabla u_i|
			\end{aligned}$ &
			$\begin{pmatrix}
				\nabla & 0\\
				0 & \nabla \\
				\phi\partial_x & \phi\partial_y
			\end{pmatrix}
			$\\
			
			Model (M2) & $\dfrac{\lambda^2}{\lambda^2+\|\nabla f\|^2}$ &$\displaystyle\frac{1}{2}\int_\Omega (f_t+\nabla f\cdot \textbf{u})^2$ & 	$\begin{aligned}[t]
				\frac{1}{2}\int_\Omega \phi(x,f,\nabla f)(\nabla_H\cdot\textbf{u})^2 \\
				&\hspace*{-1.2cm}+\sum_{i=1}^2 \int_\Omega |\nabla u_i|
			\end{aligned}$ & 
			$
			\begin{pmatrix}
				\nabla & 0\\
				0 & \nabla \\
				\phi\partial_y & -\phi\partial_x
			\end{pmatrix}
			$\\
			\hline 
		\end{tabular}
	}
	\caption{Summary of the terms in the Primal-Dual Formulation}
	\label{table}
\end{table}
\hspace*{-0.5cm}Using standard dual identities the convex conjugate $F^*(\textbf{d})$ is computed as
\begin{equation*}
	F^*(\textbf{d}) = \frac{1}{2}\|d_3\|^2_2 +\alpha\sum_{i=1}^2 \delta_{B(L^\infty)}(d_i/\alpha),
\end{equation*}
where $B(L^\infty)$ denotes the unit ball in $L^\infty(\Omega)$ and
\begin{equation*}
	\delta_{B(L^\infty)}(x^*)=\begin{cases}
		0, \text{ if } x^*\in B(L^\infty)\\
		+\infty, \text{ otherwise}
	\end{cases}.
\end{equation*}
Thus the primal-dual formulation is given as
\begin{equation*}
	\argmin_\textbf{u} \argmax_{\textbf{d}}\:\: ( \textbf{u},K^*\textbf{d}) -\frac{1}{2\beta}\|d_3\|^2_2 -\alpha\sum_{i=1}^2 \delta_{B(L^\infty)}(d_i/\alpha).
\end{equation*}
Accordingly, the Chambolle-Pock algorithm for this primal-dual problem is given as:
\begin{align*}
	\tilde{\textbf{d}}^{k+1}&=\textbf{d}^k+\sigma K\bar{\textbf{u}},\\
	\textbf{d}_{1,2}^{k+1}&=\argmin_{\textbf{d}}\Bigg\{\frac{1}{2}\|\textbf{d}-\tilde{\textbf{d}}_{1,2}^{k+1}\|_2^2+\alpha\sigma\delta_{B(L^\infty)}(\textbf{d}/\alpha)\Bigg\},\\
	d_{3}^{k+1}&=\argmin_{d}\Bigg\{\frac{1}{2}\|d-\tilde{d}_{3}^{k+1}\|_2^2+\frac{\sigma}{2\beta}\|d\|_2^2\Bigg\},\\
	\tilde{\textbf{u}}^{k+1}&=\textbf{u}^k-\tau K^*\textbf{d}^{k+1},\\
	\bar{\textbf{u}}_{k+1} &=2\textbf{u}_{k+1}-\textbf{u}_k.
\end{align*}
To derive the optimality condition for the dual variables $d_3$, consider the functional
\begin{equation*}
	J(d_3)=\frac{1}{2}\int_\Omega (d_3-\tilde{d}_3)^2+\frac{\sigma}{2\beta}\int_\Omega d_3^2.
\end{equation*}
Therefore setting $d_\theta J = 0$ we get
\begin{equation*}
	d_3-\tilde{d}_3+\frac{\sigma}{\beta}d_3 =0.
\end{equation*}
Rearranging we get
\begin{equation*}
	d_3^{k+1}=\frac{\beta}{\beta+\sigma}\tilde{d}_{3}^{k+1},
\end{equation*}
The solution for the indicator function $\delta$ is given by the point-wise projections of $\tilde{\textbf{d}}^{k+1}, \proj\nolimits_\alpha(\tilde{\textbf{d}}^{k+1})$ onto the unit ball, see \cite{burger,dirks}. Thus the iterative scheme for the Chambolle-Pock is given as
\begin{align*}
	\tilde{\textbf{d}}^{k+1}&=\textbf{d}^k+\sigma K\bar{\textbf{u}},\\
	\textbf{d}_{1,2}^{k+1}&=\proj\nolimits_\alpha\Big(\tilde{\textbf{d}}_{1,2}^{k+1}\Big),\\
	d_3^{k+1}&=\frac{\beta}{\beta+\sigma}\tilde{d}_{3}^{k+1},\\
	\tilde{\textbf{u}}^{k+1}&=\textbf{u}^k-\tau K^*\textbf{d}^{k+1},\\
	\bar{\textbf{u}}_{k+1} &=2\textbf{u}_{k+1}-\textbf{u}_k.
\end{align*}

\subsection{Optimality Condition for Our Model $(M2)$}
In this case we have
\[
G(\textbf{u})= \frac{1}{2}\int_\Omega (f_t+\nabla f\cdot \textbf{u})^2, \quad F(K\textbf{u})=\frac{1}{2}\int_\Omega \phi(f,\nabla f)(\nabla_H\cdot\textbf{u})^2 +\sum_{i=1}^2 \int_\Omega |\nabla u_i|.
\]
The Operator $K$ is given as
\begin{equation*}
	K\textbf{u}=\begin{bmatrix}
		\nabla & 0 \\
		0 & \nabla \\
		\phi\partial_y & -\phi\partial_x
	\end{bmatrix}
	\begin{bmatrix}
		u_1\\[0.4cm] u_2
	\end{bmatrix}.
\end{equation*}
Therefore,
\begin{equation*}
	K^*\textbf{d} = -\begin{bmatrix}
		\nabla\cdot & 0 & \partial_y(\phi\cdot) \\
		0 & \nabla\cdot & -\partial_x(\phi\cdot)
	\end{bmatrix}
	\begin{bmatrix}
		d_1\\
		d_2\\
		d_3
	\end{bmatrix}.
\end{equation*}
As before, the convex conjugate $F^*(\textbf{d})$ is computed as
\begin{equation*}
	F^*(\textbf{d}) = \frac{1}{2}\|d_3\|^2_2 +\alpha\sum_{i=1}^2 \delta_{B(L^\infty)}(d_i/\alpha),
\end{equation*}
Thus the primal-dual formulation is given as
\begin{equation*}
	\argmin_\textbf{u} \argmax_{\textbf{d}}\:\: \frac{1}{2}\int_\Omega (f_t+\nabla f\cdot \textbf{u})^2 + ( \textbf{u},K^*\textbf{d}) -\frac{1}{2\beta}\|d_3\|^2_2 -\alpha\sum_{i=1}^2 \delta_{B(L^\infty)}(d_i/\alpha).
\end{equation*}
Accordingly, the Chambolle-Pock algorithm for this primal-dual problem is given as:
\begin{align*}
	\tilde{\textbf{d}}^{k+1}&=\textbf{d}^k+\sigma K\bar{\textbf{u}},\\
	\textbf{d}_{1,2}^{k+1}&=\argmin_{\textbf{d}}\Bigg\{\frac{1}{2}\|\textbf{d}-\tilde{\textbf{d}}_{1,2}^{k+1}\|_2^2+\alpha\sigma\delta_{B(L^\infty)}(\textbf{d}/\alpha)\Bigg\},\\
	d_{3}^{k+1}&=\argmin_{d}\Bigg\{\frac{1}{2}\|d-\tilde{d}_{3}^{k+1}\|_2^2+\frac{\sigma}{2\beta}\|d\|_2^2\Bigg\},\\
	\tilde{\textbf{u}}^{k+1}&=\textbf{u}^k-\tau K^*\textbf{d}^{k+1},\\
	\textbf{u}^{k+1}&=\argmin_{\textbf{u}}\Bigg\{\frac{1}{2}\|\textbf{u}-\tilde{\textbf{u}}^{k+1}\|_2^2+\frac{\tau}{2}\int_\Omega (f_t+\nabla f\cdot \textbf{u})^2\Bigg\},\\
	\bar{\textbf{u}}_{k+1} &=2\textbf{u}_{k+1}-\textbf{u}_k.
\end{align*}
The optimality conditions for the dual variables follow directly from above. For the primal variable $\textbf{u}$ the optimality condition can be obtained directly by a quadratic minimization, see \cite{dirks} for more details. The equations are given as
\begin{align*}
	(1+\tau f_x^2)u_1 + \tau f_xf_y u_2 &= \tilde{u}^{k+1}_1 - \tau f_xf_t,\\
	\tau f_xf_yu_1 + (1+\tau f_y^2) u_2 &= \tilde{u}^{k+1}_2 - \tau f_yf_t.
\end{align*} 
Thus the iterative scheme for the Chambolle-Pock is given as
\begin{align*}
	\tilde{\textbf{d}}^{k+1}&=\textbf{d}^k+\sigma K\bar{\textbf{u}},\\
	\textbf{d}_{1,2}^{k+1}&=\proj\nolimits_\alpha\Big(\tilde{\textbf{d}}_{1,2}^{k+1}\Big),\\
	d_3^{k+1}&=\frac{\beta}{\beta+\sigma}\tilde{d}_{3}^{k+1},\\
	\tilde{\textbf{u}}^{k+1}&=\textbf{u}^k-\tau K^*\textbf{d}^{k+1},\\
	\textbf{u}^{k+1} &= \Bigg(\frac{b_1c_3-c_2b_2}{c_1c_3-c_2^2},\frac{b_2c_1-c_2b_1}{c_1c_3-c_2^2}\Bigg),\\
	\bar{\textbf{u}}_{k+1} &=2\textbf{u}_{k+1}-\textbf{u}_k.
\end{align*}
where $c_1,c_2,c_3$ are the elements of the coefficient matrix given by $c_1 = 1+\tau f_x^2,c_2 = \tau f_xf_y, c_3 = 1+\tau f_y^2$, $b_1,b_2$ are the right hand side values given by $b_1 = \tilde{u}^{k+1}_1 - \tau f_xf_t,b_2=\tilde{u}^{k+1}_2 - \tau f_yf_t$. In the next section we will look at the numerical discretization and other implementation details.
	\section{Results}
	\label{sec:5}
Having obtained the Chambolle-Pock algorithm for solving the saddle-point problem, we now look at the implementation details. Algorithm \ref{pd1} shows the Chambolle-Pock algorithm for our nonlinear constraint-based refinement model.

\begin{algorithm}[H]
	\caption{}\label{pd1}
	\begin{algorithmic}[1]
		\State Initialize $\tau,\sigma\gets 1/\sqrt 8,1/\sqrt 8$
		\State Initialize $\textbf{u}^0\gets$ HS($f_1,f_2$), $\textbf{d}^0\gets$ 0
		\State Initialize operator $K$
		\Repeat
		\State $\textbf{u}_\text{old}\gets\textbf{u}$
		\State Build Matrix $K$
		\State $\tilde{\textbf{d}}\gets \textbf{d}+\sigma K\bar{\textbf{u}}$
		\State $d_{1,2}\gets\proj_{\sigma/\alpha}(\tilde{d}_{1,2})$
		\State $d_3\gets \frac{\beta}{\beta+\sigma}\tilde{d}_3$
		\State Build Matrix $K^*$
		\State $\tilde{\textbf{u}}\gets \textbf{u}-\tau K^*\textbf{d}$
		\State $\textbf{u}\gets \tilde{\textbf{u}}$
		\State $\bar{\textbf{u}}\gets 2\textbf{u}-\textbf{u}_\text{old}$
		\Until{convergence}
	\end{algorithmic}
\end{algorithm}
As mentioned previously this model works in two phases wherein the first phase we obtain a crude-pixel correspondence and subsequently refine this estimate in the next phase driven by additional constraints. The initial Horn and Schunck flow was computed using the Chambolle-Pock algorithm, see \cite{cham,dirks}. Here we observed that using a forward difference scheme for both spatial and temporal image derivatives $f_x,f_y$ and $f_t$ respectively does not yield a stable discretization. Instead, a forward difference scheme for $f_t$ and a central difference scheme for $f_x,f_y$ does yield a stable numerical scheme. In the next step, the operator matrix $K$ is constructed for updating the dual variables $d_1,d_2,d_3$ shown in steps 9 and 10. This requires solving two sub-problems, one for $d_1,d_2$ and the other for $d_3$.

Now $\nabla u_i=(u_{{i}_x},u_{{i}_y}),i=1,2$. The associated dual variable is $d_i = (d_{i,1},d_{i,2})$. The primal formulation comprises of the total-variation regularization. Accordingly,
\begin{equation*}
	|\nabla u_i|_{L^1} = |u_{{i}_x}|+|u_{{i}_y}|.
\end{equation*}
Thus the associated dual norm for the variable $d_i$ gives
\begin{equation*}
	\|d_i\|_{L^\infty} = \max\{|d_{i,1}|,|d_{i,2}|\}.
\end{equation*}
The solution for this minimization is the point-wise projection onto the unit ball corresponding to the dual norm. As shown previously, the convex conjugate of the total variation term is the indicator function $\delta_{L^\infty}(d_i/\alpha)$. The associated convex set is defined by
\begin{equation*}
	\{d_i:\|d_i/\alpha\|\le 1\} = \{d_i:\|d_i\|\le \alpha\}, i = 1,2.
\end{equation*}
Thus the dual update for $d_{1,2}$ can be obtained by the point-wise projection of $\tilde{d}_{1,2}$ onto $[-\alpha,\alpha]$ given as
\begin{equation*}
	d_{1,2} = \proj\nolimits_{\sigma/\alpha}(\tilde{d}_{1,2}) = \min(\alpha,\max(-\alpha, \tilde{d}_{1,2})).
\end{equation*}
The sub-problem for $d_3$ is a linear quadratic minimization problem as discussed in the previous section. In the next step the adjoint operator $K^*$ is constructed to update the primal variable $\textbf{u}$. The subsequent over-relaxation step $\bar{\textbf{u}}\gets 2\textbf{u}-\textbf{u}_\text{old}$ is a particular case for $\theta=1$ in \cite{cham} for easier estimates of the convergence. The algorithm is further simplified if the regularization term is linear. In this case the only difference that occurs is the updation of the dual variable $d_{1,2}$ leading to the optimality condition
\begin{equation*}
	d_{1,2}=\frac{\alpha}{\alpha+\sigma}\tilde{d}_{1,2}.
\end{equation*}
The stopping criterion is determined by computing the normalized error from the primal and dual residues. This error metric was introduced by the authors in \cite{gold} and is numerically less expensive. Let $\textbf{u}^{(k)},\textbf{d}^{(k)}$ be the primal and the dual updates after $k$ iterations respectively. Then the primal and dual residues at the $k^{th}$ iteration are computed by the formula:
\begin{align*}
	p_{\text{res}}^{(k)} :&= \Bigg|\frac{\textbf{u}^{(k)}-\textbf{u}^{(k+1)}}{\tau} - K^*(\textbf{d}^{(k)}-\textbf{d}^{(k+1)})\Bigg|,\\
	d_{\text{res}}^{(k)} :&= \Bigg|\frac{\textbf{d}^{(k)}-\textbf{d}^{(k+1)}}{\sigma} - K(\textbf{u}^{(k)}-\textbf{u}^{(k+1)})\Bigg|.
\end{align*}
Therefore the normalized error at $k^{th}$ step is obtained as:
\begin{equation*}
	e^{(k)}=\frac{p_{\text{res}}^{(k)}+d_{\text{res}}^{(k)}}{\mu(\Omega)},
\end{equation*}
where $\mu(\Omega)$ refers to the measure of the domain $\Omega$. Chambolle and Pock \cite{cham} also showed that the convergence criterion is fulfilled when $\tau\sigma\|K\|^2< 1,\theta = 1$. Thus $\tau$ and $\sigma$ need to be chosen accordingly. An optimal numerical upper-bound was obtained by Chambolle \cite{cham1} which satisfies the above criterion. Accordingly we set $\tau=\sigma=1/\sqrt 8$.

The Chambolle-Pock algorithm for the angular accuracy model follows in a similar manner. The main difference lies in the primal update step 12 because of the explicit presence of the data term in the functional. The primal variable $\textbf{u}$ is updated by solving a quadratic minimization problem discussed previously leading to the following update step,
\begin{equation*}
	\textbf{u}^{k+1} = \Bigg(\frac{b_1c_3-c_2b_2}{c_1c_3-c_2^2},\frac{b_2c_1-c_2b_1}{c_1c_3-c_2^2}\Bigg),
\end{equation*}
where $c_1 = 1+\tau f_x^2,c_2 = \tau f_xf_y, c_3 = 1+\tau f_y^2$, $b_1 = \tilde{u}^{k+1}_1 - \tau f_xf_t,b_2=\tilde{u}^{k+1}_2 - \tau f_yf_t$. We now show the results for obtained by implementing the algorithm. The first sequence is the Oseen vortex pair. For more details on the sequence we refer to \cite{Liu2}.

\begin{figure}[H]%
	\centering
	\subfloat{{\includegraphics[width=6cm]{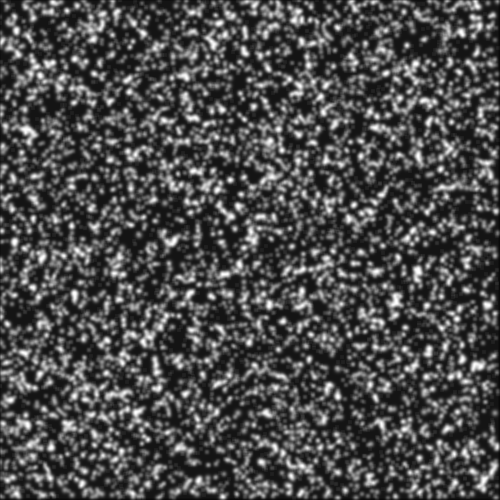}}}%
	\qquad
	\subfloat{{\includegraphics[width=6cm]{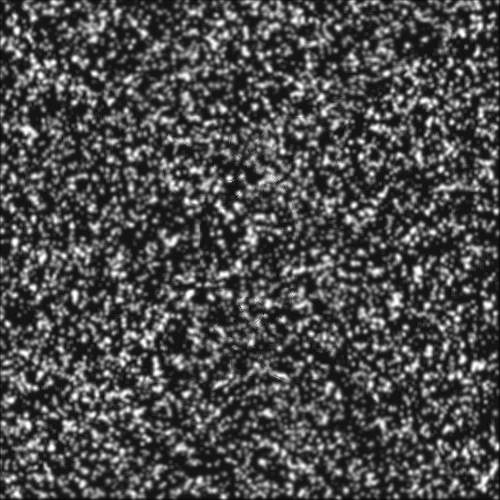}}}%
	\caption{Oseen Vortex Pair \cite{Liu2}.}%
	\label{f1}%
\end{figure}

\begin{figure}[H]%
	\centering
	\includegraphics[width=10cm]{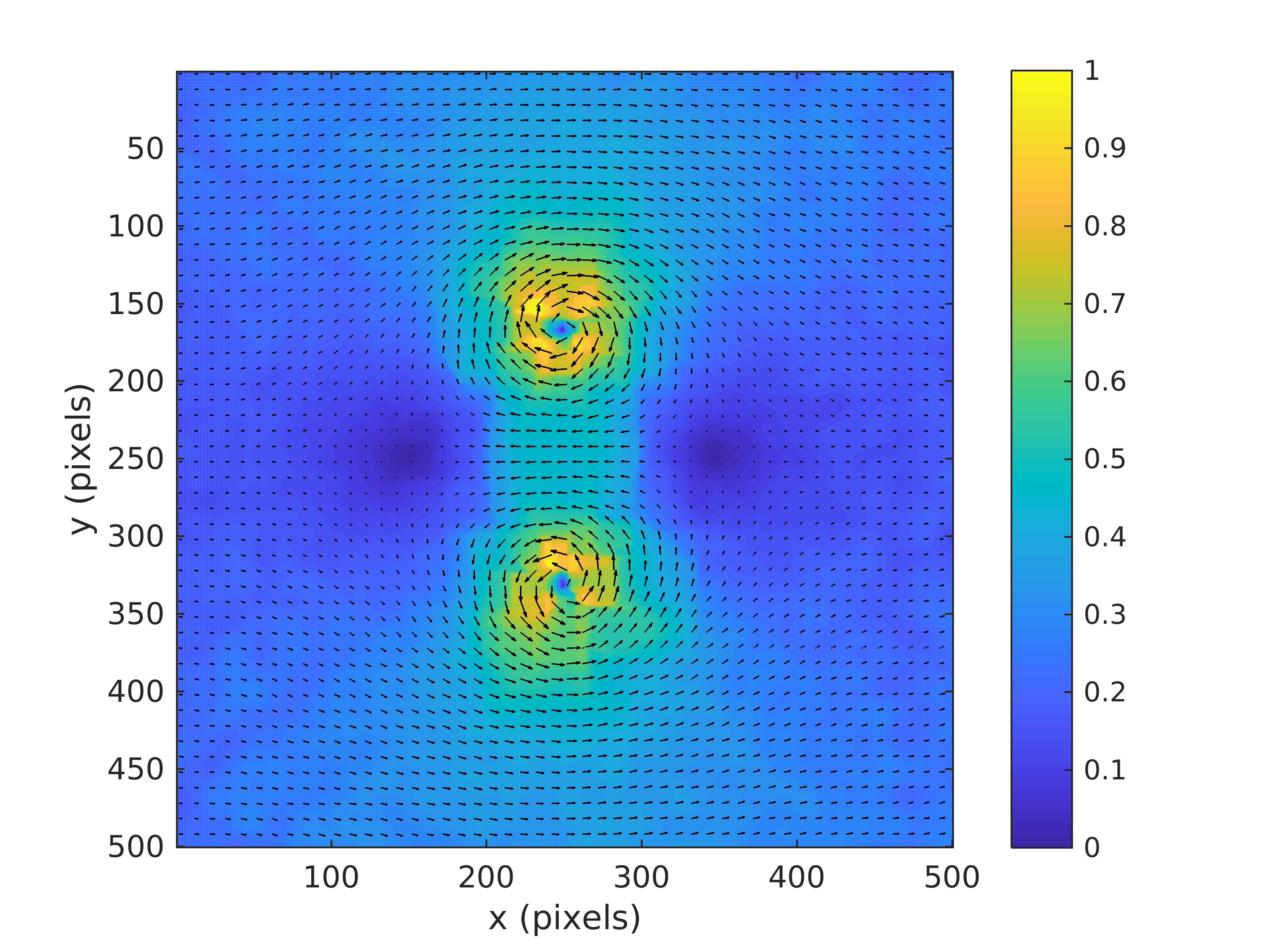}\label{1a}%
	\caption{Velocity magnitude plot for the Oseen vortex pair with $\alpha=0.1,\beta=0.01, \text{iter}=50$.}%
	\label{f2}%
\end{figure}
Figure (\ref{f2}) shows the velocity magnitude plot obtained for the oseen vortex pair. The algorithm produces dense flow fields while correctly estimating the vortex cores. We now empirically demonstrate the rate of convergence of both models. 

\begin{table}[H]
	\centering
	\renewcommand{\arraystretch}{1.2}
	\begin{tabular}{|p{4cm}|c|c|c|c|}
		\hline
		\multirow{2}{7cm}{} & \multicolumn{2}{c|}{$\epsilon=0.1$} & \multicolumn{2}{c|}{$\epsilon=0.01$}\\ 
		\cline{2-5}
		& Model (M1) & Model (M2) & Model (M1) & Model (M2) \\
		\cline{2-5}
		Oseen Vortex Pair & 78 & 627& 755 & 53753 \\ 
		Cloud Sequence  & 20 & 99& 450 & 16068 \\
		Sphere Sequence & 10 & 316& 561 & 18009\\
		Hydrangea & 103 & 346& 937 & 12574 \\
		Rubberwhale & 42 & 351& 617 & 13590 \\
		\hline
	\end{tabular}
\caption{Total number of iterations required by the algorithms to reach the threshold of $\epsilon$.} 
\label{table2}
\end{table}
Table (\ref{table2}) shows the number of iterations required by the algorithm to reach the error threshold of $\epsilon$. By an order $O(1/N)$ convergence we mean that the number of iterations required to reach a tolerance $\epsilon$ is $O(1/\epsilon)$. This is validated from the above table. For Model (M1), for $\epsilon=0.1$, the number of iterations required to reach the threshold of $0.1$ is a multiple of 10. For Model (M2) it requires a multiple of $10^2$ iterations. The table also shows that roughly it requires a multiple of $100$ iterations for the Model (M1) algorithm to reach the threshold of $\epsilon=0.01$ and a multiple of $100^2$ iterations for Model (M2). The reason for this efficiency can be explained from the fact that in phase 1, the Horn and Schunck initialization brings the solution within a close error range. As a result in the second phase we observe a $O(1/N)$ convergence as mentioned in \cite{cham}.

\subsection{Modern Implementation Principles}

Recent developments in optical flow computation reveal that the flow estimates can be significantly improved by incorporating certain established implementation principles. To accommodate these principles in our framework, Algorithm \ref{pd1} is suitably modified to make it a part of a larger implementation procedure.

The computation of image derivatives follows a weighted averaging principle \cite{sun1,bruhn}. The current flow estimates are used to warp the second image towards the first using bi-cubic interpolation. The time derivative is the difference between the first image and the warped image. The spatial derivatives are obtained as a weighted average of the first image and the warped image. The weight coefficient is called the blending ratio was chosen between 0 and 1.

\begin{figure}[H]%
	\centering
	\includegraphics[width=10cm]{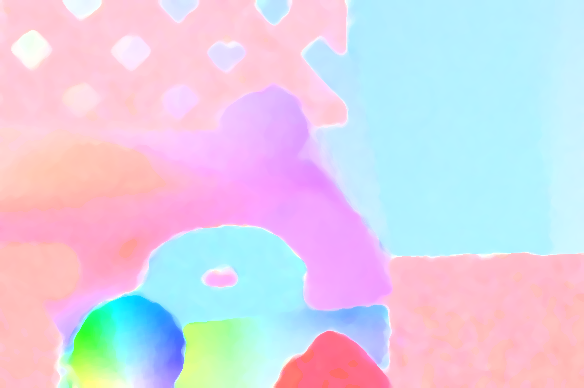}%
	\caption{Estimated flow field for the \emph{rubberwhale} sequence with $\alpha=10,\beta=1$.}%
	\label{f3}%
\end{figure}
To account for large displacements of pixel motions, a coarse-to-fine pyramidal scheme is employed \cite{wedel,bruhn,dirks}. The flow field is first computed at the coarsest level. This estimate is upsampled to the next finer level via interpolation and is used to warp the second image towards the first image. The flow increments at this finer level are then computed between the first image and the warped image. This process continues till the finest resolution level is reached. At each pyramid level, 10 warping steps are performed. after each warping iteration, a $5\times 5$ median filter is applied on the flow estimates to remove outliers. Figure (\ref{f3}) shows the obtained flow field for nonlinear refinement using Algorithm 1 for the rubberwhale sequence.

\begin{table}[H]
	\centering
	\renewcommand{\arraystretch}{1.2}
	\begin{tabular}{|p{4.8cm}|c|c|c|c|}
		\hline
		\multirow{2}{7cm}{} & \multicolumn{2}{c|}{Model 1$^\dag$} & \multicolumn{2}{c|}{Model 2$^\dag$} \\ 
		\cline{2-5}
		& \textbf{AAE} & \textbf{EPE} & \textbf{AAE} & \textbf{EPE} \\
		\hline
		
		Linear Refinement   & 3.412 & 0.105 & 3.410 & 0.105\\ 
		Nonlinear Refinement  & 3.397  & 0.104 & 3.355 & 0.103\\
		\hline
		
	\end{tabular}
	\caption{Comparison of the Average Angular Error (AAE) and End Point Error (EPE) for \emph{rubberwhale} sequence. ($^\dag$ refers to the algorithm incorporated with the modern implementation principles)}
	\label{t1}
\end{table}
An improvement is seen in the average angular error (AAE) and the end-point error (EPE) for the nonlinear refinement compared to the linear refinement for the \emph{rubberwhale} sequence as shown in Table (\ref{t1}). The improvement however is not significant which indicates that incorporating the modern implementation principles like coarse-to-fine warping, median filtering and so on also improves the accuracy of the linear refinement process. There are however image sequences for which edge-information are not well-preserved by the linear refinement process. We demonstrate this with the sphere sequence \cite{otago}.

\begin{figure}[H]%
	\centering
	\subfloat{{\includegraphics[width=5cm]{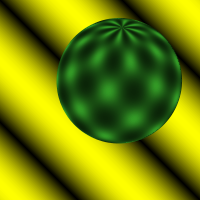}}}%
	\qquad
	\subfloat{{\includegraphics[width=5cm]{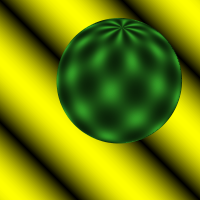}}}%
	\caption{Sphere sequence \cite{otago}}%
	\label{f6}%
\end{figure}

\begin{figure}[H]%
	\hspace{-1.5cm}
	\subfloat[Linear Refinement]{{\includegraphics[width=8cm]{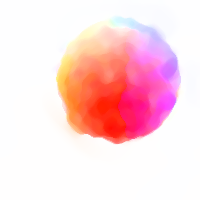}\label{5a}}}%
	\hspace{0.9cm}
	\subfloat[Nonlinear Refinement]{{\includegraphics[width=8cm]{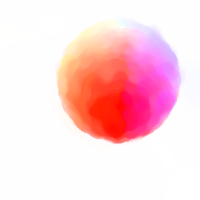}\label{5b}}}%
	\caption{Estimated flow fields of the \emph{sphere} sequence from \cite{otago} using Algorithm 1$^\dagger$ with $\alpha=1,\beta=0.1$ ($^\dagger$ refers to the algorithm incorporated with the modern implementation principles).}%
	\label{f5}%
\end{figure}

Figure (\ref{f5}) shows the color-coded flow estimate for the \emph{sphere} sequence using the Middlebury color coding. The isotropic behaviour is seen in the linear case because of which the edges are not well preserved.

\section*{Conclusion}
In this paper we have proposed two nonlinear variational models for obtaining an accurate estimation of physics-based flow fields such as rotational and fluid flow. The first model is a novel two-phase refinement process where in the first phase a crude estimate is obtained and subsequently refined using additional constraints in the second phase. We have studied the well-posedness of this model using an Evolutionary PDE approach. The second model performs the same refinement using a single functional. We used the first-order primal-dual Chambolle-Pock algorithm for the numerical implementation of the above models. We further empirically demonstrated that the two-phase model leads to a faster convergence rate of the order $O(1/N)$ compared to the second model which has a convergence rate of the order $O(1/N^2)$.

\end{document}